%% file: affine_tracing_arxiv.tex
\documentclass[11pt]{scrartcl}

\usepackage[utf8]{inputenc} 
\usepackage[T1]{fontenc}    
\usepackage{hyperref}       
\usepackage{url}            
\usepackage{booktabs}       
\usepackage{amsfonts}       
\usepackage{nicefrac}       
\usepackage{microtype}      
\usepackage{xcolor}         
\usepackage{wrapfig}

\usepackage{amsmath,amssymb,amsthm}
\usepackage{xcolor}
\usepackage{hyperref}
\usepackage[noabbrev, capitalise]{cleveref}
\usepackage{algorithm}
\usepackage{algorithmicx,algpseudocodex}
\usepackage{natbib}
\usepackage{acro}
\usepackage{enumitem}
\usepackage[
  textsize=tiny,
  figwidth=0.99\linewidth,
]{todonotes}
\usepackage{subcaption}
\usepackage{listings}

\usepackage{xcolor}
\usepackage{xfrac}

\definecolor{codebg}{HTML}{F7F7F8}
\definecolor{codegreen}{HTML}{16A34A}
\definecolor{codepurple}{HTML}{7C3AED}
\definecolor{codeorange}{HTML}{EA580C}
\definecolor{codeblue}{HTML}{2563EB}
\definecolor{codegray}{HTML}{6B7280}

\lstdefinestyle{nicestyle}{
  backgroundcolor=\color{codebg},
  commentstyle=\color{codegreen}\itshape,
  keywordstyle=\color{codepurple}\bfseries,
  stringstyle=\color{codeorange},
  emphstyle=\color{codeblue},
  numberstyle=\tiny\color{codegray},
  basicstyle=\ttfamily\small,
  breaklines=true,
  numbers=left,
  numbersep=8pt,
  frame=single,
  rulecolor=\color{codegray!30},
  xleftmargin=1.5em,
  framexleftmargin=1.5em,
  tabsize=4,
}

\usepackage{tikz}
\usetikzlibrary{arrows.meta}

\input{macros.tex}

\input{acronyms}

\title{Affine Tracing: A New Paradigm for Probabilistic Linear Solvers}

\makeatletter
\renewcommand\@date{{%
  \vspace{-\baselineskip}%
  \large\centering
  \begin{tabular}{@{}c@{}}
    Disha Hegde\textsuperscript{1} \\
  \end{tabular} \quad  \quad
  \begin{tabular}{@{}c@{}}
    Marvin Pförtner\textsuperscript{2}\\
  \end{tabular}  \quad  \quad
  \begin{tabular}{@{}c@{}} 
    Jon Cockayne\textsuperscript{1} \\
  \end{tabular}

  \bigskip

  \textsuperscript{1}University of Southampton\\
  \textsuperscript{2}Tübingen AI Center, University of Tübingen

  \bigskip

}}
\makeatother

\begin{document}

\maketitle
\date{}

\begin{abstract}
    \input{00_abstract}
\end{abstract}

\input{01_introduction}

\input{02_background}

\input{03_theory}

\input{04_affine_tracing}

\input{05_experiments}

\input{06_conclusions}

\section*{Acknowledgements}
MP gratefully acknowledges financial support by the European Research Council through ERC StG Action 757275 / PANAMA and ERC CoG Action 101123955 / ANUBIS; the DFG Cluster of Excellence ``Machine Learning - New Perspectives for Science'', EXC 2064/1, project number 390727645; the German Federal Ministry of Education and Research (BMBF) through the Tübingen AI Center (FKZ: 01IS18039A); the DFG SPP 2298 (Project HE 7114/5-1); and the Carl Zeiss Foundation (project ``Certification and Foundations of Safe Machine Learning Systems in Healthcare''); as well as funds from the Ministry of Science, Research and Arts of the State of Baden-Württemberg. The authors thank the International Max Planck Research School for Intelligent Systems (IMPRS-IS) for supporting MP. JC was supported by EPSRC grant EP/Y03533X/1.

\bibliographystyle{plainnat}
\bibliography{probabilistic_multigrid}

\newpage
\appendix
\crefalias{section}{appendix}

\input{A01_proofs.tex}
\input{A03_two_grid_mg.tex}

\input{A02_cagps.tex}

\end{document}

%% file: macros.tex
\theoremstyle{plain}
\newtheorem{theorem}{Theorem}[section]

\newtheorem{lemma}[theorem]{Lemma}

\theoremstyle{definition}
\newtheorem{definition}[theorem]{Definition}

\theoremstyle{remark}

\newcommand{\reals}{\mathbb{R}}

\newcommand{\inner}[1]{\langle #1 \rangle}

\newcommand{\defeq}{:=}
\newcommand{\normal}{\mathcal{N}}

\newcommand{\GProd}[2]{G_{#1:#2}}


%% file: acronyms.tex
\DeclareAcronym{pls}{short=PLS, long=probabilistic linear solver, long-plural=s}
\DeclareAcronym{cagp}{short=CAGP, long=computation-aware Gaussian process,long-plural=es}
\DeclareAcronym{GP}{short = GP, long = Gaussian process, long-plural=es}

%% file: 00_abstract.tex
Probabilistic linear solvers (PLSs) return probability distributions that quantify uncertainty due to limited computation in the solution of linear systems.
The literature has traditionally distinguished between Bayesian PLSs, which condition a prior on information obtained from projections of the linear system, and probabilistic iterative methods (PIMs), which lift classical iterative solvers to probability space.
In this work we show this dichotomy to be false: Bayesian PLSs are a special case of non-stationary affine PIMs.
In addition, we prove that any realistic affine PIM is calibrated.
These results motivate a focus on (non-stationary) affine PIMs, but their practical adoption has been limited by the significant manual effort required to implement them.
To address this, we introduce \emph{affine tracing}, an algorithmic framework that automatically constructs a PIM from a standard implementation of an affine iterative method by passing symbolic tracers through the computation to build an affine computational graph.
We show how this graph can be transformed to compute posterior covariances, and how equality saturation can be used to perform algebraic simplifications required for computation under specific prior choices.
We demonstrate the framework by automatically generating a probabilistic multigrid solver and evaluate its performance in the context of Gaussian process approximation.

%% file: 01_introduction.tex
\section{Introduction}

This paper focuses on solving linear systems of the form
\begin{align} \label{eq:the_system}
    A x^\star = b
\end{align}
where $A \in \reals^{d \times d}$, $b \in \reals^{d}$ and $x^\star \in \reals^{d}$.
These linear systems appear frequently in many computational routines like \ac{GP} regression \citep{rasmussen_gaussian_2006}, finite element analysis \citep{Elman2014} and Bayesian inverse problems \citep{Stuart2010}.

\Acp{pls} \citep{cockayne_probabilistic_2021,cockayne_bayesian_2019} are probabilistic numerical methods \citep{hennig_probabilistic_2022} that return a probability distribution intended to quantify the uncertainty incurred due limited computational resources spent.
This uncertainty can then be propagated through computational pipelines, e.g.\ to accelerate \acp{GP} \citep{wenger_posterior_2022,hegde_calibrated_2025}, (Bayesian) generalised linear models \citep{tatzel_accelerating_2023}, and inverse problems \citep{cockayne_bayesian_2019,vyas_randomised_2025}.

Previous works have dichotomised \acp{pls} as Bayesian and non-Bayesian.
Bayesian \acp{pls} accept a prior distribution $\mu_0$ on the solution of the linear system and condition it on information of the form $z_i = S_i A x^\star =  S_i^\top b$ where $S_i \in \reals^{d \times i}$ is a matrix of search directions with linearly independent columns.
The prior is generally chosen to be Gaussian, resulting in a Gaussian posterior distribution.
Non-Bayesian approaches, referred to as \emph{probabilistic iterative methods} (PIMs) \citep{cockayne_probabilistic_2021}, instead construct a probabilistic solver by lifting a classical iterative method to probability space.
For such an iterative method, let $P^i$ denote the map such that $x_i = P^i(x_0)$; we then define our probabilistic solver as the pushforward $\mu_i = P^i_\sharp \mu_0$ for some initial distribution $\mu_0$.
Again, typically $\mu_0$ is taken to be Gaussian; then, provided $P^i$ is an affine map, the pushforward is also Gaussian.

In this work we show this to be a false dichotomy: Bayesian methods are probabilistic iterative methods when the sequence $P^i$ are projection methods.
We also show that focussing on probabilistic iterative methods allows us to perform the lifting to probability space \emph{algorithmically}, rather than using pen-and-paper calculations, by transforming computational graphs constructed from an implementation of the classical underlying iterative method, in an approach we refer to as \emph{affine tracing}.
This relieves a significant bottleneck in probabilistic linear algebra: the requirement to reimplement probabilistic algorithms requires completely new, often bespoke numerical analysis and algebraic manipulations to be performed for efficiency or to perform cancellations to use particular prior structures.
In this paper we show that this process can be automated, again algorithmically, to yield efficient implementations of sophisticated modern linear solvers.

\subsection{Contributions} \label{sec:contributions}

In this paper, we make the following contributions.

\begin{itemize}
    \item We prove that Bayesian \acp{pls} are a special case of PIMs, and prove that any realistic affine PIM is calibrated.
    \item We introduce a new way to algorithmically construct PIMs, using symbolic tracing, including a mechanism for automating cancellations required for specialised prior choices.
    \item We showcase the methodology by automatically generating an implementation of a probabilistic multigrid method, whose construction would otherwise require significant mathematical derivation effort.
    We demonstrate the efficacy of this solver compared to existing, state-of-the-art \acp{pls}, by applying it in Gaussian process approximation.
\end{itemize}

\subsection{Structure of the paper}

The rest of the paper is structured as follows. In \cref{sec: background}, we recall necessary background on Bayesian PLSs and probabilistic iterative methods. \cref{sec:theory} introduces our main theoretical results showing that Bayesian PLSs are PIMs, and that affine PIMs are calibrated.
In \cref{sec:implementation} we describe our algorithm for constructing PLSs from implementations of underlying iterative methods.
We showcase a prototype implementation in \cref{sec: experiments}, and conclude with discussion in \cref{sec:discussion}.

%% file: 02_background.tex
\section{Background} \label{sec: background}

In this section we briefly discuss the two leading approaches to probabilistic linear solvers: Bayesian methods and probabilistic iterative methods.

\subsection{Bayesian Probabilistic Linear Solvers} \label{sec:bayesian_pls}

Early works on Bayesian probabilistic linear algebra focussed on what is now called the \emph{matrix-based} approach: making Bayesian inferences about matrices from their action on probe vectors \citep{Hennig2015}.
\cite{cockayne_bayesian_2019} introduced the solution-based view, which limits attention to probabilistic linear solvers, placing a prior distribution on the unknown solution to a linear system, and conditioning on projection against search directions.
We begin by briefly outlining this setup.

For ease of computation, Bayesian \acp{pls} begin with a prior distribution $\mu_0 \sim \mathcal{N}(x_0, \Sigma_0)$.
The $i$\textsuperscript{th} iterate $\mu_i$ is obtained by sequentially conditioning this distribution on information of the form
$s_i^\top A x^\star = s_i^\top b =: y_i$
where $(s_1, \dots, s_d)$ are a set of linearly independent search directions.
The Gaussian conditioning formula yields $\mu_i = \mathcal{N}(x_i, \Sigma_i)$ with
\begin{align*}
    x_i &= x_0 + \Sigma_0 A^\top S_i (S_i^\top A \Sigma_0 A^\top S_i)^{-1} S_i^\top (b - A x_0) \\
    \Sigma_i &= \Sigma_0 - \Sigma_0 A^\top S_i (S_i^\top A \Sigma_0 A^\top S_i)^{-1} S_i^\top A \Sigma_0
\end{align*}
where $S_i = [s_1 \cdots s_i]$.
The benefit is clear: since the required matrix inversion is $\mathcal{O}(i^3)$, the overall cost becomes quadratic in $d$ (assuming dense $A$) and cubic in the number of search directions.
This cost can be reduced by careful choice of search directions to ensure diagonality of $S_i^\top A \Sigma_0 A^\top S_i$ \citep{cockayne_bayesian_2019}, and extensions have connected the matrix- and solution-based perspectives \citep{bartels_probabilistic_2019}.

The most popular solver in this area is arguably BayesCG \citep{cockayne_bayesian_2019}, which selects $S_i$ according to the Lanczos algorithm.
Under the \emph{inverse prior} $\Sigma_0 = A^{-1}$, the iterate from the conjugate gradient method is recovered as the posterior mean, at the cost of uncertainty quantification that has repeatedly been observed to be incorrect, due to nonlinearities arising from implicit dependence of the Lanczos directions on $x^\star$ that are ignored to construct a Gaussian posterior (e.g.\cite{hegde_calibrated_2025}).
This makes it clear that the \emph{Bayes} in BayesCG in a misnomer: the algorithm is not truly Bayesian.
There have since been various attempts to correct for this discrepancy following empirical Bayesian procedures \citep{reid_statistical_2023}, randomisation \citep{vyas_randomised_2025} and conformal approaches \citep{Hou2025}.
This highlights a central challenge with Bayesian methods: the choice of search directions $S_i$ is crucial, and approaches which lead to fast convergence tend to be poorly calibrated due to incorrect application of Bayes rule.
In the next section we discuss probabilistic iterative methods, which in some cases mitigate this issue.

\subsection{Probabilistic Iterative Methods} \label{sec:affine_pls}

\cite{cockayne_probabilistic_2021} introduced a complementary approach to that discussed in \cref{sec:bayesian_pls} which is based on \emph{lifting} an iterative linear solver into probability space.
Given an initial iterate $x_0$, an iterative linear solver produces iterates according to
\[
    x_i = P_i(x_{i-1}, x_{i-2}, \dots, x_0)
\]
for some sequence of maps $P_i : (\reals^d)^i \to \reals^d$.
We say that an iterative method is \emph{affine} if $P_i$ is an affine map, \emph{j\textsuperscript{th}-order} \citep[Chapter 3]{YoungIterativeSolutions} if $P_i$ is independent of $x_{i-j-1}, \dots, x_0$ for all $i > j$, and \emph{stationary} if $P_i = P$ for all $i$, that is, if the maps are independent of $i$.

Probabilistic iterative methods are based on the observation that, provided the maps $P_i$ are affine, one can push a joint Gaussian distribution on $(x_{i-1}, \dots, x_0)$ through $P_i$ to produce a new joint Gaussian distribution on $(x_i, \dots, x_0)$.
Any affine iterative method for solving \cref{eq:the_system} can therefore be lifted into a probabilistic iterative method for solving \cref{eq:the_system} by specifying a Gaussian prior $\mu_0 = \normal(x_0, \Sigma_0)$, from which posterior distributions over $x_1, x_2, \dots$ can be computed in terms of the matrices and vectors that define each $P_i$.

We restrict attention to first degree affine methods, i.e, $P_i(x) = G_i x + f_i$ for some matrix $G_i \in \reals^{d \times d}$ and vector $f_i \in \reals^d$.
Higher degree methods can be transformed into first degree methods on an augmented space, as shown in \citet{cockayne_probabilistic_2021}, so this approach is still general.
Clearly if all the constituent maps are affine then the composed iteration map $P^i = P_i \circ \dots \circ P_1$, that transports $x_0$ to $x_i$, is also affine.
We say that a first degree method for solving $Ax^\star = b$ has the \emph{fixed point property} if $P_i(x^\star) = x^\star$.

To lift a first degree affine iterative method into a probabilistic iterative method we follow \citet[Proposition 4]{cockayne_probabilistic_2021}.
Starting with a initial Gaussian distribution, i.e., $\mu_0 = \normal(x_0, \Sigma_0)$, the $i$\textsuperscript{th} iterate is $\mu_i = \normal(x_i, \Sigma_i)$, with
\begin{align*}
    x_i = P^i (x_0) \qquad \Sigma_i = \GProd{i}{1} \Sigma_0 \GProd{i}{1}^\top.
\end{align*}
where $\GProd{i}{1} = \prod_{j=i}^1 G_j$ and $P^i = P_i \circ \dots \circ P_1$.
Thus $x_i$ coincides with the $i$\textsuperscript{th} iterate of the underlying iterative method when initialised at the prior mean $x_0$.
\cite{cockayne_probabilistic_2021} focussed on the case of \emph{stationary methods}, i.e.\ $P_i = P$ for all $i$.
Then we have $\GProd{i}{1} = G^i$.

Whereas the quality of uncertainty quantification provided by the methods discussed in \cref{sec:bayesian_pls} is assured by virtue of being Bayesian,
these methods are clearly not in general a conditional distribution.
Previous work has justified the uncertainty quantification provided by PIMs as being \emph{strongly calibrated}, according to the following definition.

\begin{definition}[Strong Calibration] \label{def:strong_calib}
Suppose that $\mu_0 = \mathcal{N}(x_0, \Sigma_0)$ and $\mu(x^\star) = \mathcal{N}(x(x^\star), \Sigma)$ is a belief over the solution to the linear system $A x^\star = b$, with $\Sigma$ independent of $x^\star$.
Further suppose that $\Sigma$ has rank $r \leq d$, and let $R, N$ be matrices such that $R \in \reals^{d\times r}$ has column space equal to the range of $\Sigma$ and $N \in \reals^{d \times (d-r)}$ has column space equal to its kernel.

We say that $\mu(x^\star)$ is \emph{strongly calibrated} if, when $x^\star \sim \mu_0$,
\begin{align}
    (R^\top \Sigma R)^{-\frac{1}{2}} R^\top(x(x^\star) - x^\star) &\sim \mathcal{N}(0, I_r) \label{eq:cond1}\\
    N^\top (x(x^\star) - x^\star) &= 0. \label{eq:cond2}
\end{align}
\end{definition}

The intuition behind this definition is clear: if the prior is correctly specified, then the truth should be a plausible sample from the posterior.
This has the same interpretation as subjective Bayesian calibration results but does not rely on the learning procedure $\mu(x^\star)$ being a Bayesian procedure.

As mentioned, \cite{cockayne_probabilistic_2021} originally constructed probabilistic iterative methods and demonstrated their strong calibration specifically in the stationary setting.
\cite{hegde_calibrated_2025} showed that strong calibration properties propagate to downstream tasks, represented by computation-aware Gaussian processes \citep{wenger_posterior_2022}, and also demonstrated that this can result in better computational performance than using BayesCG.
However, this relies on the use of the \emph{inverse prior} and, specifically, on the property that the posterior covariance has a particular downdate form, i.e.\ $\Sigma_i = A^{-1} - D_i$ for some \emph{downdate} $D_i$ that is independent of $A^{-1}$.
Computation under the inverse prior and, specifically, computability of the downdate, is required for applications of \acp{pls} such as \acp{cagp}, as described in \cref{sec: cagp}.
While in \cref{thm:map_structure} we prove that this is an emergent property of \emph{any} fixed point affine iterative method, derivation of the downdate has previously required bespoke hand-computation for each method.

\subsection{Challenges for Probabilistic Linear Solvers}

The above narrative reveals a \emph{calibration-convergence tradeoff} in the literature on probabilistic linear solvers.
On the one hand, we would like to use solvers that give fast convergence, such as BayesCG.
On the other, we would like solvers that give calibrated uncertainty, such as PIMs, but existing instances of these converge slowly.

The natural path to resolving this tradeoff is to explore more sophisticated iterative methods, such as multigrid methods \citep{trottenberg_multigrid_2001} or Chebyshev semi-iterations \citep[Chapters 10-11]{YoungIterativeSolutions}.
But this is blocked by an implementation barrier: each new implementation requires bespoke, error prone algebra, with a separate implementation needed to work with the inverse prior.
As we illustrate in \cref{sec:prob_multigrid_is_hard}, the need to track cross-covariances makes implementations prohibitively complex.

We resolve this with the contributions mentioned in \cref{sec:contributions}.
We first show that the Bayesian/non-Bayesian dichotomy is false, allowing contributions to PIMs to directly contribute to Bayesian methodology.
We advance the theory on PIMs to show that every realistic affine solver is calibrated, replacing the stationarity and diagonalisability assumptions made in \citet{cockayne_probabilistic_2021} with a benign fixed-point assumption.
And, we provide a computational framework allowing implementations of PIMs to be generated automatically from implementations of underlying iterative methods.

%% file: 03_theory.tex
\section{Theoretical Results} \label{sec:theory}

We prove two key theoretical results that justify a focus on probabilistic iterative methods.
First, we show that (affine-Gaussian) Bayesian methods are instances of PIMs.

\begin{theorem} \label{thm:bayes_is_pim}
    Any Bayesian \acp{pls} with Gaussian iterates are affine non-stationary PIMs.
    Conversely, let $(P_i)$ be an affine, non-stationary PIM with the fixed-point property.
    If $\GProd{i}{1}$ is a $\Sigma_0^{-1}$-orthogonal projection onto $\textup{ker}(\Sigma_0 A^\top S)$, then the PIM is a Bayesian PLS with prior $\mu_0 = \mathcal{N}(x_0, \Sigma_0)$ and search directions $S_i$.
\end{theorem}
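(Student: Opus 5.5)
For the forward direction, I will rewrite the Bayesian conditioning formulas of \cref{sec:bayesian_pls} as the composition of affine maps. Write $S_i$ for the search directions at step $i$ and set
\[
    K_i \defeq \Sigma_0 A^\top S_i (S_i^\top A \Sigma_0 A^\top S_i)^{-1} S_i^\top, \qquad \Pi_i \defeq I - K_i A .
\]
Using $b = A x^\star$, the posterior mean becomes $x_i = \Pi_i x_0 + K_i A x^\star$, which is an affine function of $x_0$. The posterior covariance is $\Sigma_i = \Sigma_0 - K_i A \Sigma_0$.

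The key algebraic step is to check that $\Pi_i$ is idempotent and self-adjoint in the $\Sigma_0^{-1}$ inner product:
\begin{itemize}
    \item Idempotence: $K_i A K_i A = K_i A$, which is a one-line cancellation of $(S_i^\top A\Sigma_0A^\top S_i)^{-1}$.
    \item Self-adjointness: $\Sigma_0^{-1} K_i A$ is symmetric.
\end{itemize}
From these, $\Pi_i \Sigma_0 \Pi_i^\top = \Pi_i \Sigma_0 = \Sigma_0 - K_i A\Sigma_0 = \Sigma_i$. So the Bayesian iterate is exactly the pushforward of $\mu_0$ under the affine map $x \mapsto \Pi_i x + K_i b$.

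To put this in the PIM form of \cref{sec:affine_pls}, define the stepwise maps $P_1 \defeq P^1$ and $P_i \defeq P^i \circ (P^{i-1})^{-1}$. This requires care, since $\Pi_{i-1}$ is singular. Instead I will use nestedness of the search spaces, $\mathrm{range}(S_{i-1}) \subseteq \mathrm{range}(S_i)$. Nested $\Sigma_0^{-1}$-orthogonal projections satisfy $\Pi_i \Pi_{i-1} = \Pi_i$, so I can take $G_i \defeq \Pi_i$ and $f_i \defeq K_i b$ and verify
\[
    \Pi_i(\Pi_{i-1} x_0 + K_{i-1} b) + K_i b = \Pi_i x_0 + K_i b .
\]
This reduces to $\Pi_i K_{i-1} A x^\star = 0$, which holds because $K_{i-1}A$ projects onto a subspace of $\mathrm{range}(K_i A)$. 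Hence $\GProd{i}{1} = \Pi_i$, and the method is affine and non-stationary. It also has the fixed-point property, since $\Pi_i x^\star + K_i A x^\star = x^\star$.

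For the converse, the fixed-point property gives $P^i(x) = \GProd{i}{1}(x - x^\star) + x^\star$ by induction on $i$. The means then satisfy $x_i = x^\star + \GProd{i}{1}(x_0 - x^\star)$.

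If $\GProd{i}{1}$ is the $\Sigma_0^{-1}$-orthogonal projection whose range is $\ker(S_i^\top A)$ (I read the statement's $\ker(\Sigma_0 A^\top S)$ this way, i.e.\ as the $\Sigma_0^{-1}$-orthogonal complement of $\mathrm{range}(\Sigma_0 A^\top S_i)$), then by uniqueness of orthogonal projections $\GProd{i}{1} = \Pi_i$. Indeed, $\Pi_i$ is idempotent and $\Sigma_0^{-1}$-self-adjoint, its kernel is $\mathrm{range}(\Sigma_0A^\top S_i)$, and its range is $\ker(S_i^\top A)$. It follows that the mean equals $\Pi_i x_0 + K_i A x^\star$, and the covariance equals $\Pi_i\Sigma_0\Pi_i^\top = \Sigma_i$, which are the Bayesian posterior formulas.

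The main obstacle I expect is in the forward direction: the stepwise factorisation $P_i$ requires nested search spaces and invertibility of $S_i^\top A\Sigma_0A^\top S_i$, both of which I will state as standing assumptions. The subspace identification in the converse is the second delicate point, and I will fix it by identifying the projection through its kernel $\mathrm{range}(\Sigma_0 A^\top S_i)$.
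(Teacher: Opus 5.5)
Your proposal is correct and follows essentially the paper's route. For the forward direction you set $\GProd{i}{1} = I - K_i A$ and $\bar f_i = K_i b$ and match mean and covariance; the paper multiplies out $\GProd{i}{1}\Sigma_0\GProd{i}{1}^\top$ directly, while you shortcut this via idempotence and $\Sigma_0^{-1}$-self-adjointness. For the converse you pin down $\GProd{i}{1}$ as the $\Sigma_0^{-1}$-orthogonal projection onto $\ker(S_i^\top A)$ and recover $\bar f_i = (I - \GProd{i}{1})x^\star$ from the fixed-point property; the paper gets the projection from its explicit formula (the projection lemma) and the shift from Young's theorem, where you use uniqueness of orthogonal projections and induction. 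Your additional factorisation into first-order steps $G_i = \Pi_i$, $f_i = K_i b$ via $\Pi_i \Pi_{i-1} = \Pi_i$ is a worthwhile step the paper omits, since it simply posits the composed map $P^i$. Your standing assumptions of nestedness and invertibility of $S_i^\top A\Sigma_0A^\top S_i$ are automatic in the paper's setup, because $S_i = [s_1 \cdots s_i]$ has linearly independent columns, $A$ is nonsingular and $\Sigma_0$ is positive definite.
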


The restriction to Gaussian iterates and affine maps is fundamental to the preceding theorem; it is unclear if the same result holds when these restrictions are relaxed.
It is particularly worth noting that BayesCG does not fall under the umbrella of the above theorem because of the dependence of the search directions on the solution vector.
The result of this is that CG has maps $P^i$ that are not affine in $x_0$ or $x^\star$.
However, as has regularly been remarked, BayesCG is not truly a Bayesian algorithm \citep{cockayne_bayesian_2019,reid_statistical_2023,hegde_calibrated_2025}, therefore it is unsurprising that it does not fit within \cref{thm:bayes_is_pim}.

Our next theoretical result proves strong calibration of essentially any affine iterative method.
The only requirement is that the iterative map have the fixed point property, which is a benign requirement of an iterative method.

\begin{theorem}\label{thm:affine_calibrated}
    Let $\mu_0 = \normal(x_0, \Sigma_0)$, where $\Sigma_0$ is symmetric positive definite.
    Let $P^i$ be the map transporting $x_0$ to $x_i$ for an affine iterative method for solving $A x^\star = b$, and suppose that $P^i$ has the fixed point property.
    Then the probabilistic iterative method associated with $P^i$ is strongly calibrated in the sense of \cref{def:strong_calib}.
\end{theorem}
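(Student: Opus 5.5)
The plan is to reduce everything to the linear part of the composed map. Since each $P_j(x) = G_j x + f_j$ is affine, the composition $P^i$ is affine, say $P^i(x) = \GProd{i}{1} x + c_i$ for some vector $c_i$. The fixed point property gives $P^i(x^\star) = x^\star$. Hence
\[
x_i - x^\star = P^i(x_0) - P^i(x^\star) = \GProd{i}{1}(x_0 - x^\star).
\]
Here $x_i = x(x^\star)$ is the PIM mean. It depends on $x^\star$ only through $b = Ax^\star$, which enters the offsets $f_j$. The covariance $\Sigma_i = \GProd{i}{1}\Sigma_0\GProd{i}{1}^\top$ does not depend on $x^\star$, as \cref{def:strong_calib} requires. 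If instead the fixed point property is only assumed for each $P_j$, I apply it step by step: $P_j(x^\star)=x^\star$ for all $j$ gives the same identity by induction.

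Next, take $x^\star \sim \mu_0 = \normal(x_0,\Sigma_0)$, so that $x_0 - x^\star \sim \normal(0,\Sigma_0)$. Then $e \defeq x(x^\star) - x^\star \sim \normal(0, \Sigma_i)$, the linear image of a Gaussian. The calibration condition is therefore the statement that the error has exactly the reported covariance, and the remaining work is to check the two conditions of \cref{def:strong_calib} for $\Sigma = \Sigma_i$, with $R$ and $N$ spanning its range and kernel.

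For \eqref{eq:cond2}: $N^\top e$ is Gaussian with mean zero and covariance $N^\top \Sigma_i N = 0$, since the columns of $N$ lie in $\ker \Sigma_i$. So $N^\top e = 0$ almost surely, which is the intended reading of the equality.

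For \eqref{eq:cond1}: $R^\top e \sim \normal(0, R^\top \Sigma_i R)$. The main technical point is that $R^\top\Sigma_i R$ is invertible, so that the symmetric inverse square root exists. Because $\Sigma_i$ is symmetric positive semidefinite, its range is the orthogonal complement of its kernel. Suppose $v^\top R^\top\Sigma_i R v = 0$. Then $\Sigma_i^{1/2}Rv=0$, so $Rv \in \ker\Sigma_i$. But $Rv$ also lies in $\operatorname{range}\Sigma_i = (\ker\Sigma_i)^\perp$, so $Rv = 0$. Since $R$ has full column rank $r$, this forces $v=0$, and $R^\top\Sigma_i R$ is positive definite.

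Finally, $(R^\top \Sigma_i R)^{-1/2}R^\top e$ has covariance $(R^\top\Sigma_i R)^{-1/2}(R^\top\Sigma_i R)(R^\top\Sigma_i R)^{-1/2} = I_r$ and mean zero, which gives \eqref{eq:cond1}. The only step I expect to need care is the invertibility argument above. Positive definiteness of $\Sigma_0$ is not otherwise needed beyond making $\mu_0$ a well-defined Gaussian.
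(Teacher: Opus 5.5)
Your proof is correct, and it takes a more direct route than the paper's. The paper builds a rank factorisation $\GProd{i}{1} = Y_1 W_1^\top$ and uses positive definiteness of $\Sigma_0$ to identify $\ker\Sigma_i = \ker \GProd{i}{1}^\top = \operatorname{range}(\tilde Y_2)$ and $\operatorname{range}\Sigma_i = \operatorname{range}(Y_1)$. It then verifies \eqref{eq:cond1} and \eqref{eq:cond2} for the specific pair $R = Y_1$, $N = \tilde Y_2$ by explicit algebra, ending with the whitened error written as $(W_1^\top\Sigma_0W_1)^{-1/2}W_1^\top(x^\star - x_0)$.

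You instead isolate the one fact doing the work: the fixed point property gives $x(x^\star) - x^\star = \GProd{i}{1}(x_0 - x^\star) \sim \normal(0,\Sigma_i)$, so the error has exactly the reported covariance. You then dispatch both conditions by a basis-free argument valid for every admissible $R$ and $N$. The paper checks only one pair and leaves basis-independence implicit.

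Your version has several advantages:
\begin{itemize}
\item it is shorter and avoids the factorisation machinery;
\item it uses the symmetric inverse square root directly, whereas the paper's identity $(Y_1^\top\Sigma Y_1)^{-1/2} = (W_1^\top\Sigma_0W_1)^{-1/2}(Y_1^\top Y_1)^{-1}$ holds only when $(\cdot)^{-1/2}$ is read as a whitening factor $F$ with $F(Y_1^\top\Sigma Y_1)F^\top = I$;
\item it goes through verbatim for singular $\Sigma_0$.
\end{itemize}

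The paper's version in exchange yields explicit bases, and it gives \eqref{eq:cond2} as an identity in $x^\star$ rather than almost surely. You recover the latter when $\Sigma_0 \succ 0$, since $N^\top\Sigma_i N = 0$ together with $\Sigma_0 \succ 0$ forces $\GProd{i}{1}^\top N = 0$. So your closing remark slightly undersells your argument: degenerate Gaussians are perfectly well defined, and positive definiteness is needed only for this deterministic upgrade.
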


The assumptions underlying \cref{thm:affine_calibrated} are significantly weaker than those in \citet{cockayne_probabilistic_2021}; earlier results assumed the iterative method to be stationary, and that the matrix $G$ was diagonalisable, which is typically difficult to verify.
A notable and strong limitation that remains is the affine restriction.
Again, the Gaussian algebra that this enables is fundamental to the proof, and completely different proof techniques would be needed to study calibration of PIMs constructed from non-affine maps.

%% file: 04_affine_tracing.tex
\section{Implementation: Affine Tracing} \label{sec:implementation}

There are two central challenges associated with implementation of \acp{pls} that we aim to address in this section:

\begin{enumerate}[nosep]
    \item The matrices and vectors defining each $P_i$ are often available only implicitly.
    \item Deriving posteriors for PLSs involves computing additional cross-covariance terms.
\end{enumerate}

The affine tracing methodology that we introduce resolves these two issues by running symbolic tracers through standard implementations of the iterative methods, and using the tracer to automate lifting to probability space.
Before we introduce this, we first demonstrate the issue using a multigrid method.

\subsection{Illustration: Probabilistic Multigrid Methods} \label{sec:prob_multigrid_is_hard}

Multigrid methods are among the most efficient iterative solvers for structured linear systems, achieving optimal $O(d)$ complexity for some problem classes.
These methods operate by constructing approximations to $A^1, \dots, A^n$ to the matrix $A$ on a number of different \emph{grids}, such that on the finest grid $A^1 = A$ and on the coarsest grid $A^n$ is sufficiently small that we are willing to solve linear systems involving it.
A basic two-grid cycle applies a pre-smoother (e.g. weighted Jacobi or Gauss-Seidel), computes a coarse-grid correction ($e^1$ in \cref{alg:multigrid}) by restricting the residual, solving on the coarse grid, and interpolating the correction back, and then applies a post-smoother.
More details can be found in \citet{trottenberg_multigrid_2001}.
An algorithmic description of a two-grid correction scheme is given in \cref{alg:multigrid}.

The \textsc{PreSmooth} and \textsc{PostSmooth} routines are often taken to be one or more iterations of an affine stationary iterative method (e.g.\ Gauss-Seidel).
\textsc{Restrict} and \textsc{Interpolate} map from grid $1$ to grid $2$.
\textsc{Solve} performs a linear solve.

It is possible to select all of these intermediate routines to be affine, so that \textsc{TwoGridCycle} is an affine map.
One could therefore perform the lifting described in \cref{sec:affine_pls} to produce a probabilistic two grid cycle, but this presents some immediate difficulties.
An implementation based on \cref{alg:multigrid} requires tracking cross covariances between the output of the components, significantly increasing the computational complexity.
And, while it is clear that $\textsc{TwoGridCycle}(x) = Gx + f$ for some $G$ and $f$, $G$ is available only implicitly, so it is not immediately apparent how one can compute $G \Sigma_0 G^\top$ to avoid the cross-covariance issue.
These issues are further compounded in the case of the inverse prior, in which case explicit access to $G$, to cancel terms of the form $A A^{-1}$ and identify the downdate, would be required.

To illustrate the complexity, consider the posterior covariance after a single two-grid cycle, assuming $\textsc{PreSmooth}(x) = \textsc{PostSmooth}(x) = Gx + f$, $\textsc{Restrict}(x) = R x + f'$ and $\textsc{Interpolate}(x) = P x + f''$.
The iteration matrix of the two grid cycle is then
$$
G_{\textsc{mg}} = G(I - P (A^2)^{-1} R A)G
$$
While this expression is manageable for a single two-grid cycle, the complexity rapidly becomes unmanageable as the number of cycles and complexity of the subroutines increases.
We have provided a derivation of probabilistic two-grid multigrid in \cref{sec:two_grid_derivation} both to illustrate this, and as a benchmark for experiments in \cref{sec: experiments}.
The complexity of these derivations is precisely the implementation barrier that affine tracing resolves.

\subsection{Affine Tracing} \label{sec:affine_tracing}

The solution we propose is to build an \emph{affine tracer}.
This involves passing a bespoke symbolic tracer into a standard implementation of an iterative method in place of $x_0$.
By operator overloading, the tracer records all affine operations applied to it into a computational graph, which we can then manipulate to compute both $x_i$ and $\Sigma_i$.
To compute $x_i$, we simply pass $x_0$ through the graph, applying all the recorded affine operations, to compute $x_i$.
This could also be performed during the initial traversal to construct the computational graph.
At present the tracer \emph{assumes} that the traced program only contains affine operations, and will raise errors if a non-affine operation is encountered.
Handling non-affine operations would be a significant extension to the methodology, and is left for future work.

To compute $\Sigma_i$ we perform a straightforward transformation of the computational graph, replacing all nodes involving shift operations (i.e. $x \mapsto x + v$) with identity nodes ($x \mapsto x$), and preserving all other nodes.
This transformation results in a graph representation of the matrix $\GProd{i}{1}$, otherwise defined only implicitly.
We can then compute $\Sigma_i$ with two forward passes through the transformed graph, once to compute $\GProd{i}{1} \Sigma_0$, and then again to compute $\GProd{i}{1} (\Sigma_0 \GProd{i}{1}^\top)$, i.e.\ passing through the transpose of the first computation.
Alternatively, where we require only a low-dimensional projection of the posterior described by the matrix $V \in \reals^{d' \times d}$, we can perform a single \emph{reverse} traversal of the graph to compute $Z_i^\top = \GProd{i}{1}^\top V^\top$, and then compute $V\Sigma_i V^\top = Z_i \Sigma_0 Z_i^\top$.

An example of applying the affine tracer in the context of Gauss-Seidel is given in \cref{fig:tracer_output}.
Next we turn our attention to algebraic simplification of the constructed graph.

\begin{figure}[t]
    \centering
    \begin{subfigure}[c]{0.48\textwidth}
        \centering
\begin{lstlisting}[language=Python]
def gauss_seidel(A, b, x):    
  L = tril(A)    
  U = triu(A, k=1)    
  y = U @ x    
  z = b - y    
  return solve_triangular(L,z)
\end{lstlisting}
    \end{subfigure}
    \begin{subfigure}[c]{0.48\textwidth}
        \centering
        \input{gs_graph.tex}
    \end{subfigure}
    \caption{Left: Python Gauss-Seidel routine. Right: Output graph from the affine tracer. Green nodes denote linear operations. Orange nodes denote shifts; these are replaced with identity operators for the transformation to compute the covariance described in \cref{sec:affine_tracing}.\vspace{-10pt}}
    \label{fig:tracer_output}
\end{figure}

\subsection{Inverse Priors} \label{sec:inverse_prior}

A remaining complication is the need to perform computation under the inverse prior.
We first introduce a theoretical result that ensures for any reasonable affine iterative method, the posterior has a structure that makes computation with the inverse prior feasible, and proceed to perform analysis to identify what operations are required for computation with the inverse prior.
Then, in \cref{sec:egg} we now show how we can apply this to transform the computational graph constructed in \cref{sec:affine_tracing} to apply the action of $\bar{M}_i$ algorithmically.

\begin{theorem} \label{thm:map_structure}
    Suppose that $(P_i)$ is a sequence of first-order affine maps with the fixed point property, i.e.\ that $P_i(x^\star) = x^\star$ for all $i$.
    Then there exists a matrix $\bar{M}_i$ and a vector $\bar{f}_i$ such that
    \begin{equation*}
        P^i(x) = (I - \bar{M}_i A) x + \bar{f}_i
    \end{equation*}
\end{theorem}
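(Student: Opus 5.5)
The plan is to work with the composed map $P^i = P_i \circ \dots \circ P_1$ directly rather than with the individual factors. Since each $P_j(x) = G_j x + f_j$ is affine, a short induction on $i$ gives $P^i(x) = \GProd{i}{1} x + c_i$, where $c_1 = f_1$ and $c_i = G_i c_{i-1} + f_i$. The fixed point property of each $P_j$ then passes to the composition: $P^i(x^\star) = P_i(\cdots P_1(x^\star)\cdots) = x^\star$. Substituting $x = x^\star$ into the affine form gives $c_i = (I - \GProd{i}{1})x^\star$, and therefore
\begin{equation*}
    P^i(x) = \GProd{i}{1} x + (I - \GProd{i}{1}) x^\star .
\end{equation*}
It remains to show that $I - \GProd{i}{1}$ factors as $\bar{M}_i A$ for some matrix $\bar{M}_i$.

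In the main case, where $A$ is invertible (implicit in speaking of \emph{the} solution $x^\star$ of \cref{eq:the_system}), I will simply set
\begin{equation*}
    \bar{M}_i \defeq (I - \GProd{i}{1}) A^{-1}, \qquad \bar{f}_i \defeq (I - \GProd{i}{1}) x^\star .
\end{equation*}
Then $I - \bar{M}_i A = \GProd{i}{1}$, and the claimed identity follows immediately. I will also record that $\bar{f}_i = \bar{M}_i A x^\star = \bar{M}_i b$. This remark is the useful one for \cref{sec:inverse_prior}: the shift depends on the data only through $b$, and $\GProd{i}{1} A^{-1} \GProd{i}{1}^\top$ can be rewritten as a downdate of $A^{-1}$.

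The step I expect to need care is the case where $A$ is singular but \cref{eq:the_system} is consistent. There, a fixed point property asserted for every solution $x^\star$ must be used. If $x^\star$ and $x^\star + n$ are both fixed points, with $n \in \ker A$, then subtracting the two affine expressions gives $\GProd{i}{1} n = n$. Hence $\ker A \subseteq \ker(I - \GProd{i}{1})$. Equivalently, the row space of $I - \GProd{i}{1}$ lies in the row space of $A$. A linear map that vanishes on $\ker A$ factors through $A$, so there exists $\bar{M}_i$ with $I - \GProd{i}{1} = \bar{M}_i A$; concretely one can take $\bar{M}_i = (I - \GProd{i}{1})A^{+}$. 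I would then check that $\bar{f}_i = \bar{M}_i A x^\star = \bar{M}_i b$ is the same for every solution, so $\bar{f}_i$ is well defined.
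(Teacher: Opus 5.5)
Your proof is correct, but it is organised differently from the paper's. The paper works step by step. It cites Young's characterisation of consistent affine iterations to write each single step as $G_i = I - M_i A$. It then shows by induction that this form is closed under composition, giving the recursion $\bar{M}_i = \bar{M}_{i-1} + M_i - M_i A \bar{M}_{i-1}$. (The paper's displayed bracket has the last two signs flipped, which does not affect existence.) You instead compose first, transfer the fixed point to $P^i$, and write down the closed forms $\bar{M}_i = (I - \GProd{i}{1})A^{-1}$ and $\bar{f}_i = \bar{M}_i b$ directly.

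Your route is self-contained and makes visible something the paper's proof obscures. For invertible $A$ the factorisation $\GProd{i}{1} = I - \bar{M}_i A$ holds for \emph{any} matrix $\GProd{i}{1}$. So the fixed point property is what pins the shift down to $\bar{M}_i b$, not what makes the factorisation possible. Your kernel argument for singular but consistent $A$ also goes beyond the paper, which implicitly assumes $A$ is invertible. You are right that the hypothesis must then hold for every solution: with $A = 0$ and $b = 0$, the map $P(x) = 2x$ fixes the solution $0$ but is not of the required form.

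What the paper's route buys is the per-step decomposition $G_i = I - M_i A$, together with a recursion tying $\bar{M}_i$ to the $M_i$. This per-step structure is what the rest of the paper exploits: the downdate recursion $D_i = S(M_i) + G_i D_{i-1} G_i^\top$ and the \texttt{egglog} cancellation both operate on single steps. Your closed form, by contrast, is written through $A^{-1}$ at the level of the full composite and leaves that cancellation implicit.
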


Applying \cref{thm:map_structure} it is clear, at least abstractly, that the posterior covariance has the form $\Sigma_i = (I - \bar{M}_i A) \Sigma_0 (I - \bar{M}_i A)^\top$.
Therefore, if $\Sigma_0 = A^{-1}$, we have the simplification:
\begin{align}
    \Sigma_i = A^{-1} - \bar{M}_i - \bar{M}_i^\top + \bar{M}_i A \bar{M}_i^\top = A^{-1} - D_i. \label{eq: posterior_downdate}
\end{align}
In most applications of the inverse prior, only the downdate term $D_i$ is required for computation.
\cref{thm:map_structure} therefore shows that for a generic fixed point affine iterative method, the required downdate structure is recovered.

The challenge is that the affine tracer only provides access to the left- or right- action of $\GProd{i}{1}$.
Translating this into the action of $\bar{M}_i$ requires \emph{explicit} access to $\GProd{i}{1}$ to both recognise the structure that relates it to $\bar{M}_i$ and perform the required cancellations with $A^{-1}$.
In this section, we first identify the algebraic structure associated with $D_i$.

The first step is to identify the structure of $D_i$.
We first introduce $S(M) = M + M^\top - MAM^\top$ and set $D_0 = 0$.
In \cref{sec: downdate_calc} we show that 
\begin{align*} \vspace{-5pt}
    D_i &= S(M_i) + \sum_{j=1}^{i-1} \GProd{i}{j+1} S(M_j) \GProd{i}{j+1}^\top.
    \vspace{-5pt}
\end{align*}
To limit the computational complexity of this approach, we assume that the goal is to compute a low-dimensional summary statistic of the posterior, described by $V \in \reals^{d' \times d}$ with $d' \ll d$; therefore
\begin{equation}
    V D_i V^\top = V S(M_i) V^\top + \sum_{j=1}^{i-1} V \GProd{i}{j+1} S(M_j) \GProd{i}{j+1}^\top V^\top. \label{eq: iterative_downdate}
\end{equation}
The quantities $Z_{i,j} := V \GProd{i}{j}$ can be obtained by application of the tracer from \cref{sec:affine_tracing}.
The additional requirement for the inverse prior is therefore to evaluate terms of the form $Q_i(V) = V S(M_i) V^\top$ for each $i$.
Since $Q_i(V) = V (M_i + M_i^\top - M_i A M_i ^\top) V^\top$, it is clearly sufficient to evaluate $M_i V^\top$ and $V M_i$ for arbitrary $V$, since $Q_i(V)$ can be obtained by multiplying this quantity against $V^\top $ and $(M_i V^\top)^\top$.
That is, we can \emph{isolate attention to simplifying the computational subgraph} defined by the isolated iteration maps $P_i$, rather than needing to analyse the full graph defined by $P^i$.

\paragraph{Stationary Simplifications}
In the case of a stationary method the computations above simplify significantly, enabling more efficient computation.
We then have $\GProd{i}{j} = G^{i-j}$, so that
\begin{align*}
V D_i V^\top &= V S(M) V^\top + \sum_{j=0}^{i-1} VG^j S(M) (G^j)^\top V^\top \\
&= V D_{i-1} V^\top + V G^{i-1} S(M) (G^{i-1})^\top V^\top.
\end{align*}
By caching the intermediate quantity $Z_{i-1} = V G^{i-2}$, we can compute $Z_i = Z_{i-1} G$ using affine tracing.
Then it remains to compute $Q(Z_i)$, which is discussed in the next section.

\subsubsection{Inverse Cancellation with Equality Saturation} \label{sec:egg}

In this section we will focus on a single iteration within the full, potentially nonstationary iteration pipeline, and will therefore drop the subscript $i$ associated with the iteration map.

The above narrative makes it clear that the only remaining task is to simplify the computational subgraph associated with a single step $P(x) = G x + f$, constructed in \cref{sec:affine_tracing}, to provide access to the matrix multiplications $V M$ and $M V^\top$ for arbitrary $V \in \reals^{d' \times d}$.
We accomplish this using equality saturation \citep{Tate2011}, with a custom ruleset defined in the \texttt{python} library \texttt{egglog} \citep{Shanabrook2023EgglogPython}, providing bindings to the \texttt{Rust} library \texttt{egg} \citep{Willsey2021}.

In brief, equality saturation attempts to find an optimal equivalent representation of a directed acyclic graph, such as a computational graph.
To use equality saturation, one first defines a set of subgraph structures that are considered to be equivalent.
For this paper, this amounts to providing a description of the algebraic structure associated with matrix arithmetic, that is, properties of matrix multiplication (e.g.\ associativity, distributivity), addition (e.g.\ as for multiplication, as well as commutativity), scalar multiplication, and relationships of these operations with identity and zero elements (i.e. $A^{-1} A = I$, $A - A = 0$).
With these equivalences defined, rewrite rules are applied iteratively, populating an \emph{e-graph} with equivalent representations.
We associate an approximate cost with each operation, based in this paper on matrix sizes and computational complexities rather than precise flops.
An extraction pass then selects the minimum-cost representative.
This process avoids the pathologies associated with a greedy rewrite by maintaining all equivalent graphs until extraction.

Concretely, we translate the computational graph constructed by the affine tracer into an \texttt{egglog} representation of the computation $V G$ by traversing the graph while ignoring shift operations.
We then augment the \texttt{egglog} representation by adding a symbolic right multiplication by $A^{-1}$ and subtraction of $V A^{-1}$, so that it represents the computation $V G A^{-1} - V A^{-1}$.
To emphasise, this does not involve computation of $A^{-1}$, only construction of an e-graph node representing the computation.
We then assign dimension-specific costs to the constituent matrices underpinning $G$, $V$ and $A^{-1}$, and apply equality saturation.
Since the cost associated with $A^{-1}$ is cubic in the dimension $d$, provided $d'$ is small enough this yields a numerically optimal implementation of the required computation $VM$ (under the equivalence classes supplied) due to the cancellation of $A^{-1}$ implied by \cref{thm:map_structure}.

%% file: gs_graph.tex

\begin{tikzpicture}[
    node distance=1.5cm,
    >=Stealth,
    darknode/.style={
        circle, 
        minimum size=1cm, 
        draw=none, 
        fill={rgb,255:red,64;green,94;blue,82}, 
        text=white, 
        font=\bfseries\large,
        inner sep=0pt,
    },
    shiftnode/.style={
        circle, 
        minimum size=1cm, 
        draw=none, 
        fill={rgb,255:red,227;green,168;blue,56}, 
        text=white, 
        font=\bfseries\large,
        inner sep=0pt,
    },
    arrow/.style={->, thick, color=black!70},
    looparrow/.style={->, thick, color=black!70},
]

\node[darknode] (U) {$U$};
\node[darknode, right of=U] (neg) {$-1$};
\node[shiftnode, right of=neg] (addb) {$+b$};
\node[darknode, right of=addb] (Linv) {$L^{-1}$};

\draw[arrow] (U) -- (neg);
\draw[arrow] (neg) -- (addb);
\draw[arrow] (addb) -- (Linv);


\end{tikzpicture}

%% file: 05_experiments.tex
\section{Experiments} \label{sec: experiments}

\subsection{Performance of Affine Tracer} \label{sec: time_affine_tracer}
We first compare the performance of our affine tracer to a hand implemented \ac{pls}. 
We consider the implementation of a 2-grid probabilistic multigrid solver based on lifting \cref{alg:multigrid}.
Hand-computation of the posterior is possible, and is provided in \cref{sec:two_grid_derivation}. 
We consider solving linear systems $A x = b$ of varying sizes $d$, where $A$ is a Matèrn $\sfrac{3}{2}$ kernel with lengthscale $0.1$ and amplitude $1$, and $b$ is vector of ones. 
We use 3 iterations of Gauss-Seidel as both pre- and post-smoother, a nearest neighbour restrictor 
using one neighbour and a nearest neighbour inverse-distance weighted interpolator with two neighbours.
The coarse grid operator $A^2$ is then calculated as the product of restrictor, fine grid operator and interpolator.
The fine grid is taken to a uniformly spaced one dimensional grid of size $d$ in $[0, 1]$, and the coarse grid taken to be a uniformly spaced one dimensional grid of size $d/5$ in $[0, 1]$, where $d$ is selected to ensure $d/5$ is an integer. Initial distribution is taken to be $\mathcal{N}(0, I)$ and the posterior of the PLS is computed after 20 iterations.

\cref{fig: time_affine_tracer} compares wall-time taken by the hand-implemented version (\texttt{ProbMG}) and affine traced version (\texttt{AffineTracer}), averaged over 10 runs. We see that the affine traced version is always faster than its counterpart, and time gap increases with the dimension of the system being solved. This is due to the additional cross-covariance computations required for the hand-implemented version.

\subsection{Synthetic \ac{cagp} Problem} \label{sec: synthetic}

We now test the performance of the affine traced probabilistic linear solver when using the inverse prior in the \ac{cagp} setting. For background information on CAGPs, see \cref{sec: cagp}. 
We first consider a synthetic well-specified setting where the true function is generated as a sample of the prior. 
The prior mean is taken to be 0, with prior covariance a Matèrn $\sfrac{3}{2}$ kernel with lengthscale $0.8$ and amplitude $1$. Observational noise variance is $0.1$ for data generation. Training points are a uniform 2 dimensional grid in $[0, 1]$ of size $100 \times 100$. 
For multigrid, the coarsest grid is computed to be the uniform 2 dimensional grid in $[0, 1]$ of size $30 \times 30$, and an intermediate uniform grid of size $40 \times 40$ is used for 3-grid multigrid. 
Test data is computed on the $5 \times 5$ uniform grid

Four versions of \acp{cagp} are compared as \texttt{CAGP-CG}, \texttt{CAGP-GS}, \texttt{CAGP-MG-2} and \texttt{CAGP-MG-3} that use probabilistic Gauss-Seidel, BayesCG, probabilistic 2-grid multigrid and probabilistic 3-grid multigrid as the corresponding \acp{pls} respectively. 
Probabilistic 2-grid multigrid and probabilistic 3-grid multigrid are implemented using the affine tracer, and the required downdate is computed using the \texttt{egglog} framework described in \cref{sec:egg}. The detailed algorithm is provided in \cref{alg: CAGP-SIS}. 
We also include the GP posterior as a baseline.

\begin{figure}
\includegraphics[width=\textwidth]{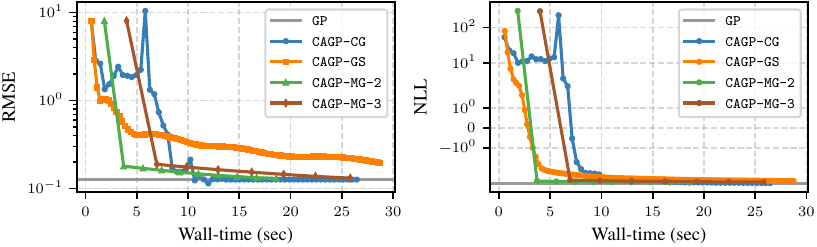}
\caption{RMSE and NLL for the synthetic problem in \cref{sec: synthetic}.} \label{fig: synthetic:rmse_nll}
\end{figure}

\cref{fig: synthetic:rmse_nll} shows the root mean square error (RMSE) and negative log likelihood (NLL) from all the methods against wall-time. 
Faster convergence of both multigrid variants is clear, and more than compensates for the higher cost of each iteration.
This suggests that multigrid iterations with affine tracing may provide state-of-the-art performance in CAGPs.

\subsection{ERA5 Regression} \label{sec: era5}

Finally we examine the performance of \texttt{CAGP-MG-2} in a large regression problem. We consider the ERA5 2 meter temperature dataset \citep{era5_2023}. We only consider the spatial data for a single time point on 1st January 2026 at 00.00. We use an equally spaced $135 \times 270$ grid of latitudes and longitudes leading to a training set of size 36450. A coarse grid of $60 \times 30$ was used for multigrid. 
The test set is taken to be an equally spaced $45 \times 90$ grid. 

\cref{fig: era5:mean} shows the posterior mean fields after expending approximately the same amount of computational time. We see that \texttt{CAGP-MG-2} is much smoother than \texttt{CAGP-CG} and \texttt{CAGP-GS} as well as reproducing more fine structure of the actual temperature field. 
\cref{fig: era5:rmse_nll} explains this behaviour as we see the RMSE of \texttt{CAGP-MG-2} drops rapidly, while \texttt{CAGP-CG} and \texttt{CAGP-GS} show much slower convergence. 
Interestingly, \texttt{CAGP-MG-2} appears to stagnate after first few iterations. 
It is unclear whether it is continuing to converge slowly, or it has reached convergence, but notably the performance of \texttt{CAGP-CG} is similar when this algorithm terminates.

\begin{figure}
\includegraphics[width=\textwidth]{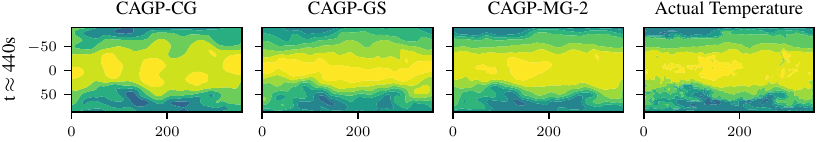}
\caption{Posterior means for the ERA5 regression problem in \cref{sec: era5}. \vspace{-10pt}} \label{fig: era5:mean}

\end{figure}

%% file: 06_conclusions.tex
\section{Conclusions} \label{sec:discussion}
In this paper we have significantly advanced understanding of PIMs by (a) showing that Bayesian \acp{pls} \emph{are} PIMs (b) proving that essentially all affine PIMs are calibrated and (c) providing an algorithm that automates implementation of affine PIMs, including computational graph simplification under the inverse prior.
This opens up a much larger design space for \acp{pls}, since practitioners can now experiment with sophisticated iterative methods without needing to produce bespoke implementations.

Several limitations of this machinery are apparent.
First, the equality saturation approach was found to be rather brittle in practice.
It would be much more appealing to construct the entire computational graph associated with all the \ac{pls} iterations and pass this to an equality saturation algorithm, than to break down to the level discussed in \cref{sec:egg}.
Unfortunately this results in graphs that are too large and complex for the simplification to be performed.
Future research could investigate bespoke simplification algorithms for the (highly structured) simplifications required for the inverse prior.
Also left unexplored is the numerical impact of equality saturation.
The optimal algorithm identified by the e-graph may not be numerically stable.

The most significant outstanding questions concern relaxing the affine assumption, which essentially excludes Krylov methods, the workhorse of solution of sparse linear systems.
Krylov methods can still be lifted into PIMs by the construction detailed in \cref{sec:affine_pls}; this was explored in \citet{cockayne_probabilistic_2021}, which used samples from $P^i_\sharp \mu_0$ to show experimental evidence for calibration.
The central challenge is tractability, since a sampling algorithm is not numerically appealing.
Future work might explore Gaussian approximations.
A secondary question concerns theory.
The theoretical results presented in this paper rely intrinsically on affine-Gaussian algebra.
To extend towards Krylov methods would require different proof techniques, perhaps in the more general theoretical framework exposed in \citet{cockayne_testing_2022}.

%% file: A01_proofs.tex
\section{Proofs}

We begin with a brief lemma before proving \cref{thm:bayes_is_pim}.

\begin{lemma} \label{lem: projection}
    Let $M \in \reals^{d \times i}$ have full column rank.
    The orthogonal projection onto $\textup{range}(M)$ with respect to the inner product induced by a symmetric positive definite matrix $S$ is given by    
    \begin{align*}
        P_M = M \left(M^\top S M\right)^{-1} M^\top S.
    \end{align*}
    The orthogonal projection onto the $S$-orthogonal complement of $\textup{range}(M)$ is given by $I - P_M$.
\end{lemma}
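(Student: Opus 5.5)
The plan is to verify the two defining properties of an $S$-orthogonal projection directly: $P_M$ is idempotent with range exactly $\textup{range}(M)$, and the residual $x - P_M x$ is $S$-orthogonal to $\textup{range}(M)$. The complement statement then follows from the general fact that $I - P$ is the complementary projection.

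First, we check that $P_M$ is well defined. Since $S$ is symmetric positive definite and $M$ has full column rank, for any $c \neq 0$ we have $Mc \neq 0$. Hence
\[
    c^\top M^\top S M c = \inner{Mc, Mc}_S > 0,
\]
so $M^\top S M \in \reals^{i \times i}$ is symmetric positive definite and in particular invertible.

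Next, we establish idempotence and the range.
\begin{itemize}
    \item Idempotence is a one-line computation:
    \[
        P_M^2 = M (M^\top S M)^{-1} (M^\top S M)(M^\top S M)^{-1} M^\top S = P_M.
    \]
    \item We have $\textup{range}(P_M) \subseteq \textup{range}(M)$ trivially, from the leading factor $M$.
    \item Conversely, $P_M M = M$, so every $Mc$ is fixed by $P_M$. Hence $\textup{range}(P_M) = \textup{range}(M)$ and $P_M$ acts as the identity there.
\end{itemize}

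For $S$-orthogonality, take any $x$ and any $c$ and compute
\[
    (Mc)^\top S (x - P_M x) = c^\top \left(M^\top S x - M^\top S M (M^\top S M)^{-1} M^\top S x\right) = 0.
\]
So the residual lies in the $S$-orthogonal complement $\textup{range}(M)^{\perp_S}$. Equivalently, $P_M$ is self-adjoint in the $S$-inner product, i.e.\ $S P_M = P_M^\top S$, which one sees from the symmetry of $S M (M^\top S M)^{-1} M^\top S$. Together with the previous step, this identifies $P_M$ as the $S$-orthogonal projector onto $\textup{range}(M)$. Uniqueness holds because the decomposition $\reals^d = \textup{range}(M) \oplus \textup{range}(M)^{\perp_S}$ is direct.

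Finally, for the complement, set $Q = I - P_M$.
\begin{itemize}
    \item $Q$ is idempotent, since $Q^2 = I - 2P_M + P_M^2 = Q$.
    \item $Q$ is $S$-self-adjoint, because $P_M$ is.
    \item $\textup{range}(Q) \subseteq \textup{range}(M)^{\perp_S}$ by the orthogonality computation above.
    \item For $y \in \textup{range}(M)^{\perp_S}$ we have $M^\top S y = 0$, so $P_M y = 0$ and $Q y = y$.
\end{itemize}
Hence $Q$ is the $S$-orthogonal projection onto the complement.

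None of these steps is a real obstacle. The only point needing care is the invertibility of $M^\top S M$, which is where both full column rank and positive definiteness of $S$ are used.
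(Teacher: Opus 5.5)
Your proof is correct and follows the same route as the paper: you verify directly that $P_M$ and $I-P_M$ are idempotent and self-adjoint in the $S$-inner product, and the $S$-orthogonality of the residual (equivalently, of the two ranges) gives the complement statement. You are slightly more thorough than the paper, since you explicitly check that $M^\top S M$ is invertible and that $\textup{range}(P_M)=\textup{range}(M)$ and $\textup{range}(I-P_M)=\textup{range}(M)^{\perp_S}$, all of which the paper leaves implicit.
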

\begin{proof}
    We first verify that $P_M$ and $I-P_M$ are projections.
    This is clear using \cite[Section II-7, Theorem 2]{Yosida1995} since 
    \begin{align*}
        P_M^2 &= M \left(M^\top S M\right)^{-1} M^\top S M \left(M^\top S M\right)^{-1} M^\top S \\
        &= M \left(M^\top S M\right)^{-1} M^\top S = P_M.
    \end{align*}
    Similarly $(I - P_M)^2 = I - 2 P_M + P_M^2 = I - P_M$.

    To show that $P_M$ is an orthogonal projection it is equivalent to show it is self-adjoint in the $S$-geometry \cite[Section II-7, Theorem 2]{Yosida1995}.
    Clearly
    \begin{align*}
        \langle  x, P_M y \rangle_S &= x^\top S P_M y \\
        &= x^\top S M \left(M^\top S M\right)^{-1} M^\top S y \\
        &= x^\top P_M^\top S y \\
        &= \langle P_M x, y \rangle_S.
    \end{align*}
    The same follows for $(I - P_M)$ since $\langle x, I - P_M y \rangle_S = \langle x, y\rangle_S - \langle x, P_M y\rangle_S$.

    Finally note that if $v \in \textup{range}(P_M)$ and $v' \in \textup{range}(I - P_M)$ then 
    \begin{align*}
        \inner{v, v'}_S &= \inner{P_M v, (I - P_M) v'}_S \\
        &= \inner{P_M v, v'}_S - \inner{P_M v, P_M v'}_S \\
        &= \inner{v, P_M v'}_S - \inner{v, P_M v'}_S = 0
    \end{align*}
    where the last line is due to self-adjointness.
    This shows that $\textup{range}(I - P_M)$ is the $S$-orthogonal complement of $\textup{range}(P_M)$.
\end{proof}

\begin{proof}[Proof of \cref{thm:bayes_is_pim}]

We first prove that any Bayesian PLS is an affine non-stationary PIM. Given as prior $\mu_0 \sim \normal(x_0, \Sigma_0)$, from \citet{cockayne_bayesian_2019} we have that the posterior $\mu_i^{\textsc{bayes}}$ of a Bayesian PLS defined by search directions $S_i$ is given by
\begin{align}
    \mu_i^{\textsc{bayes}} &\sim \normal(x_i^{\textsc{bayes}}, \Sigma_i^{\textsc{bayes}}) \\
    x_i^{\textsc{bayes}} &= x_0 + \Sigma_0 A^\top S_i (S_i^\top A \Sigma_0 A^\top S_i)^{-1} S_i^\top (b - Ax_0) \\
    \Sigma_i^{\textsc{bayes}} &= \Sigma_0 A^\top S_i (S_i^\top A \Sigma_0 A^\top S_i)^{-1} S_i^\top A \Sigma_0
\end{align}
Now consider the $i$\textsuperscript{th} iterate of a non-stationary affine PIM (starting with the same prior) given by $P^i(x) = \GProd{i}{1} x + \bar{f}_i$ with
\begin{align*}
    \GProd{i}{1} &= I - \Sigma_0 A^\top S_i (S_i^\top A \Sigma_0 A^\top S_i)^{-1} S_i^\top A\\
    \bar{f}_i &= \Sigma_0 A^\top S_i (S_i^\top A \Sigma_0 A^\top S_i)^{-1} S_i^\top b
\end{align*}
The $i$\textsuperscript{th} iterate of this PIM is given by
\begin{align}
    \mu_i &\sim \normal(x_i, \Sigma_i)  \notag\\
    x_i &=  \GProd{i}{1} x_0 + \bar{f}_i \notag\\
    &= (I - \Sigma_0 A^\top S_i (S_i^\top A \Sigma_0 A^\top S_i)^{-1} S_i^\top A) x_0 + \Sigma_0 A^\top S_i (S_i^\top A \Sigma_0 A^\top S_i)^{-1} S_i^\top b \notag\\
    &= x_0 + \Sigma_0 A^\top S_i (S_i^\top A \Sigma_0 A^\top S_i)^{-1} S_i^\top (b - Ax_0) \notag\\
    &= x_i^{\textsc{bayes}} \label{eq: Bayes_PIM_mean}\\
    \Sigma_i &= \GProd{i}{1} \Sigma_0 \GProd{i}{1}^\top \notag\\
    &= ( I - \Sigma_0 A^\top S_i (S_i^\top A \Sigma_0 A^\top S_i)^{-1} S_i^\top A) \Sigma_0 ( I - \Sigma_0 A^\top S_i (S_i^\top A \Sigma_0 A^\top S_i)^{-1} S_i^\top A)^\top \notag\\
    &= \Sigma_0 - \Sigma_0 A^\top S_i (S_i^\top A \Sigma_0 A^\top S_i)^{-1} S_i^\top A \Sigma_0 - \Sigma_0 A^\top S_i (S_i^\top A \Sigma_0 A^\top S_i)^{-1} S_i^\top A \Sigma_0 \notag\\ &\quad \quad + \Sigma_0 A^\top S_i (S_i^\top A \Sigma_0 A^\top S_i)^{-1} S_i^\top A \Sigma_0 A^\top S_i (S_i^\top A \Sigma_0 A^\top S_i)^{-1} S_i^\top A \Sigma_0 \notag\\
    &= \Sigma_0 - \Sigma_0 A^\top S_i (S_i^\top A \Sigma_0 A^\top S_i)^{-1} S_i^\top A \Sigma_0 \notag\\ 
    &= \Sigma_i^{\textsc{bayes}} \label{eq: Bayes_PIM_cov}
\end{align}
This shows that the Bayesian PLS is in fact, a non-stationary affine PIM.

For the converse consider a PIM for which $\GProd{i}{1}$ is an orthogonal projection onto $\textup{ker}(\Sigma_0 A^\top S_i)$ with respect to the $\Sigma_0^{-1}$ inner-product.
Using \cref{lem: projection} we have
\begin{align*}
    \GProd{i}{1} = I - \Sigma_0 A^\top S_i \Lambda_i^{-1} S_i^\top A
\end{align*}
where $\Lambda_i = S_i^\top A \Sigma_0 A^\top S_i$.

As this nonstationary PIM is assumed to have the fixed point property, which is equivalent to the consistency property from \cite[Section~3.1]{YoungIterativeSolutions}, $\bar{f}_i$ is given by \citet[Theorem 9.1.2]{YoungIterativeSolutions} as 
\begin{align*}
    \bar{f}_i  &= (I - \GProd{i}{1}) A^{-1} b\\
    & = \Sigma_0 A^\top S_i \Lambda^{-1} S_i^\top b
\end{align*}

Thus, given $\GProd{i}{1} = I - \Sigma_0 A^\top S_i \Lambda^{-1} S_i^\top A$ and $\bar{f}_i =  \Sigma_0 A^\top S_i \Lambda^{-1} S_i^\top b$, as seen in \cref{eq: Bayes_PIM_mean,eq: Bayes_PIM_cov}, we have that the PIM iterates are equal to Bayesian PLS iterates defined by $S_i$.

\end{proof}

\begin{proof}[Proof of \cref{thm:affine_calibrated}]

This proof closely follows the proof in \citet[Proposition 10]{cockayne_probabilistic_2021}.
Throughout we suppress dependence on the iteration index $i$ since this is not relevant to the proof, and we simply use $P^i(x) = P(x) = Gx + f$.
We retain the terminology $\mu_0 = \mathcal{N}(x_0, \Sigma_0)$, but use the notation $\mu = \mathcal{N}(x, \Sigma)$ for the posterior $P_\sharp \mu_0$.
We also use the notation $A^\frac{1}{2}$ to denote a \emph{factor} of $A$ with the property $A^\frac{\top}{2} A^\frac{1}{2} = A$, not necessarily a symmetric square root.

We first compute appropriate bases $R$ and $N$ for the range and null space of the posterior covariance.

\paragraph{Null Space.}
Let $\mathrm{rank}(G) = \tilde{r}$. From \citet[Theorem 5.1.1]{bernstein_matrix_2009}, there exist non-singular $Y, W \in \reals^{d\times d}$ such that
\begin{align*}
    G = Y \Omega W
\end{align*}
where $\Omega$ is of the form
\begin{align*}
    \Omega =  \begin{bmatrix}
        I_{\tilde{r}} & 0_{\tilde{r} \times (d - \tilde{r})}\\
        0_{(d-\tilde{r}) \times \tilde{r}} & 0_{(d-\tilde{r}) \times (d-\tilde{r})}
    \end{bmatrix}.
\end{align*} 
We partition $Y$ and $W$ as
\begin{align*}
    Y = \begin{bmatrix}
        Y_1 & Y_2
    \end{bmatrix} \qquad
    W = \begin{bmatrix}
        W_1^\top \\ W_2^\top
    \end{bmatrix}
\end{align*}
with $Y_1, W_1 \in \reals^{d \times \tilde{r}}$ and $Y_2, W_2 \in \reals^{d \times (d -\tilde{r})}$. Thus $G = Y_1 W_1^\top$. 
Similarly we partition $Y^{-1}$ as
\begin{align*}
    Y^{-1} = \begin{bmatrix}
         \tilde{Y}_1^\top \\ \tilde{Y}_2^\top
    \end{bmatrix}
\end{align*}
with $\tilde{Y}_1 \in \reals^{d \times \tilde{r}}$ and $\tilde{Y}_2 \in \reals^{d \times (d -\tilde{r})}$. As $Y^\top$ has linearly independent rows, $\tilde{Y}_1$ and $\tilde{Y}_2$ have full column-rank. Using $Y^{-1} Y = I_d$ we get $\tilde{Y}_2^\top Y_1 = 0$. We now have
\begin{align*}
    G^\top \tilde{Y}_2 &= \begin{bmatrix}
        W_1 & W_2
    \end{bmatrix} \begin{bmatrix}
         I_{\tilde{r}} & 0_{\tilde{r} \times (d - \tilde{r})}\\
        0_{(d-\tilde{r}) \times \tilde{r}} & 0_{(d-\tilde{r}) \times (d-\tilde{r})}
    \end{bmatrix} \begin{bmatrix}
        Y_1^\top \\ Y_2^\top
    \end{bmatrix} \tilde{Y}_2 \\
    &= W_1 Y_1^\top \tilde{Y_2} \\
    &= 0
\end{align*}
This implies range($\tilde{Y}_2$) $\in$ ker($G^\top$). As rank($G$) = $\tilde{r}$, rank($G^\top$) = $\tilde{r}$ as well. From the rank nullity theorem \citep[Corollary 2.5.5]{bernstein_matrix_2009}, we have that nullity($G^\top$) = $d - \tilde{r}$. Thus $\tilde{Y}_2$, whose rank is $d - \tilde{r}$, must span the entire ker($G^\top$). This gives us ker($G^\top$) = range($\tilde{Y}_2$). 

Defining $Q = G \Sigma_0^{1/2}$, we have $\Sigma = Q Q^\top$. 
It is straightforward to show that $\mathrm{ker}(\Sigma) = \mathrm{ker}(Q)$:
\begin{align*}
    v &\in \text{ker}(\Sigma) \\
    \iff \Sigma v &= 0 \\
    \iff v^\top \Sigma v &= 0 \qquad \qquad \qquad (\text{Reverse implication true as } \Sigma \text{ is positive semi-definite}) \\
    \iff (Q^\top v)^\top Q^\top v &= 0 \\
    \iff Q^\top v &= 0 \\
    \iff v &\in \text{ker}(Q^\top).
\end{align*}

Thus 
\begin{align*}
    \text{ker}(\Sigma) = \text{ker}(Q^\top) = \text{ker}(\Sigma_0^{1/2}G^\top) = \text{ker}(G^\top) = \text{range}(\tilde{Y}_2)
\end{align*}
The third equality holds from \citep[Proposition 2.6.3]{bernstein_matrix_2009} as $\Sigma_0^{1/2}$ is invertible. 

\paragraph{Range.} 
Since $Y_1$ has linearly independent columns (as it is based on a partitioning of the full-rank $Y$), it has full column rank, and thus is left-invertible \citep[Theorem 2.6.2]{bernstein_matrix_2009}. 
Similarly, $W_1^\top$ is right-invertible, and thus $W_1$ is left-invertible. From \citet[Proposition 2.6.3]{bernstein_matrix_2009},
\begin{align*}
    \text{range}(G) &= \text{range}(Y \Omega W) \\
    &= \text{range}(Y_1 W_1^\top) \\
    &= \text{range}(Y_1).
\end{align*}

We also have that range($\Sigma^\top$) is the orthogonal complement of ker($\Sigma$), which is equal to ker($G^\top$). And ker($G^\top$) is the orthogonal complement of range($G$). This gives us
\begin{align*}
    \text{range}(\Sigma^\top) = \text{range}(G^) \implies \text{range}(\Sigma) = \text{range}(Y_1)
\end{align*}

\paragraph{Calibration.}

From the above, we have established that have $\mathrm{range}(\Sigma) = \mathrm{range}(Y_1)$. For the purposes of \cref{def:strong_calib} we choose $\mathbf{R} = Y_1$ and $\mathbf{N} = \tilde{Y}_2$ with $\tilde{r} = r$.
We first prove \cref{eq:cond2}. 
The fixed point property says
\begin{align}
    x^\star = G x^\star + f_m \label{eq:fixed_point}
\end{align}
Left multiplying this with $\tilde{Y}_2^\top$,
\begin{align*}
    \tilde{Y}_2^\top x^\star = \tilde{Y}_2^\top G x^\star + \tilde{Y}_2^\top f_m
\end{align*}
We thus have
\begin{align*}
    \tilde{Y}_2^\top (x^\star - x_m) &=   \tilde{Y}_2^\top G x^\star + \tilde{Y}_2^\top f_m -  \tilde{Y}_2^\top G x_0 - \tilde{Y}_2^\top f_m \\
    &= \tilde{Y}_2^\top G (x^\star - x_0) \\
    &= 0 
\end{align*}
as $\tilde{Y}_2^\top G = 0$. 
Since this equation is deterministic and $\tilde{Y}_2$ is independent of $x^\star$, this validates \cref{eq:cond2}.

Moving to \cref{eq:cond1}, we have

\begin{align*}
    Y_1^\top \Sigma Y_1 &= Y_1^\top G \Sigma_0 G^\top Y_1 \\
    &= Y_1^\top Y_1 W_1^\top \Sigma_0 W_1 Y_1^\top Y_1
\end{align*}
As $Y_1$ has full rank, $Y_1^\top Y_1$ is symmetric positive definite and $W_1^\top \Sigma_0 W_1$ is positive definite (and thus invertible) as $\Sigma_0$ is positive definite and $W_1$ has full rank \citep[Theorem~4.2.1]{Golub_MatrixComputations}. 
Thus 
\begin{align*}
    (Y_1^\top \Sigma Y_1)^{-1/2} = (W_1^\top \Sigma_0 W_1)^{-1/2} (Y_1^\top Y_1)^{-1}
\end{align*}
This gives us
\begin{align}
    &(Y_1^\top \Sigma Y_1)^{-1/2} Y_1^\top (x^\star - x_m) \\
    &\qquad= (W_1^\top \Sigma_0 W_1)^{-1/2} (Y_1^\top Y_1)^{-1} Y_1^\top (x^\star - x_m) \notag\\
    &\qquad= (W_1^\top \Sigma_0 W_1)^{-1/2} (Y_1^\top Y_1)^{-1} Y_1^\top (x^\star - Gx_0 - f_m) \notag \\
    &\qquad= (W_1^\top \Sigma_0 W_1)^{-1/2} (Y_1^\top Y_1)^{-1} Y_1^\top (x^\star - f_m) - (W_1^\top \Sigma_0 W_1)^{-1/2} (Y_1 Y_1^\top)^{-1} Y_1^\top Y_1 W_1^\top x_0 \notag\\
    &\qquad= (W_1^\top \Sigma_0 W_1)^{-1/2} (Y_1^\top Y_1)^{-1} Y_1^\top (x^\star - f_m) - (W_1^\top \Sigma_0 W_1)^{-1/2} W_1^\top x_0 \label{eq:step2}
\end{align}
Left-multiplying \cref{eq:fixed_point} by $Y_1^\top$ gives 
\begin{align*}
    Y_1^\top x^\star &= Y_1^\top G x^\star + Y_1^\top f_m \\
    Y_1^\top (x^\star - f_m) &= Y_1^\top Y_1 W_1^\top x^\star
\end{align*}
Substituting this back in \cref{eq:step2} gives us
\begin{align*}
    &(Y_1^\top \Sigma Y_1)^{-1/2} Y_1^\top (x^\star - x_m) \\
    &\qquad= (W_1^\top \Sigma_0 W_1)^{-1/2} (Y_1^\top Y_1)^{-1} Y_1^\top Y_1 W_1^\top x^\star - (W_1^\top \Sigma_0 W_1)^{-1/2} W_1^\top x_0 \\
    &\qquad= (W_1^\top \Sigma_0 W_1)^{-1/2} W_1^\top (x^\star - x_0)
\end{align*}
We now replace $x^\star$ with $X \sim \normal(x_0, \Sigma_0)$. This gives us $W_1^\top X \sim \normal(W_1^\top x_0, W_1^\top \Sigma_0 W_1)$ and thus $(W_1^\top \Sigma_0 W_1)^{-1/2} (W_1^\top X - W_1^\top x_0) \sim \normal(0, I_{r})$. This proves \cref{eq:cond1} and completes the proof.

\end{proof}

\begin{proof}[Proof of \cref{thm:map_structure}]
    Since $(P_i)$ has the fixed point property, each $P_i$ is \emph{consistent} in the notation of \citet[Section 3.1]{YoungIterativeSolutions}.
    By \citet[Theorem 3.2.6]{YoungIterativeSolutions}, since $P_i$ is affine we have that $P_i(x) = (I-M_i A) x + \tilde{f}_i$ for some $M_i$, $\tilde{f}_i$.
    The result follows by induction; assume that $P^{i-1} = (I - \bar{M}_{i-1} A) x + \bar{f}_{i-1}$.
    Then 
    \begin{align*}
        P^i(x) &= (P_i \circ P^{i-1})(x) \\
        &= P_i((I - \bar{M}_{i-1} A)x + \bar{f}_{i-1}) \\
        &= (I - M_i A)((I - \bar{M}_{i-1} A)x + \bar{f}_{i-1}) + f_i \\
        &= (I - [\bar{M}_{i-1} - M_i + M_i A \bar{M}_{i-1}] A)x + (I - M_i A)\bar{f}_{i-1} + f_i 
    \end{align*}
    as required.
    
\end{proof}

\subsection{Additional Computations for Downdate Calculation} \label{sec: downdate_calc}

We now show how the downdate can be simplified for ease of computation with \texttt{egglog}.
We first introduce $S(M) = M + M^\top - MAM^\top$ and set $D_0 = 0$.
Since $P_i x = G_i x + f_i$ we have
\begin{align*}
    \Sigma_i &= G_i \Sigma_{i-1} G_i^\top \\
    &= G_i (A^{-1} - D_{i-1}) G_i^\top \\
    &= (I - M_iA) A^{-1} (I - M_iA)^\top - G_i D_{i-1} G_i^\top \\
    D_i &= S(M_i) + G_i D_{i-1} G_i^\top \\
    \implies D_i &= S(M_i) + \sum_{j=1}^{i-1} \GProd{i}{j+1} S(M_j) \GProd{i}{j+1}^\top.
\end{align*}

%% file: A03_two_grid_mg.tex
\section{Additional Two-Grid Multigrid Computations} \label{sec:two_grid_derivation}

\cref{alg:multigrid} implements the two-grid cycle of multigrid described in \cref{sec:prob_multigrid_is_hard}.

\begin{algorithm}
    \begin{algorithmic}[1]
        \Procedure{\textsc{TwoGridCycle}}{$x$}
            \State $x^1 \gets \textsc{PreSmooth}_1(x; A^1, b)$
            \State $r^1 \gets b^1 - A^1 x^1$
            \State $r^2 \gets \textsc{Restrict}_{1 \to 2}(r^1)$ \Comment{Restrict residual to coarse grid}
            \State $e^2 \gets \textsc{Solve}(A^2, r^2)$ \Comment{Solve on coarse grid}
            \State $e^1 \gets \textsc{Interpolate}_{2 \to 1}(e^2)$ \Comment{Interpolate correction to fine grid}
            \State $x^1 \gets x^1 + e^1$
            \State $x^1 \gets \textsc{PostSmooth}_1(x^1; A^1, b)$
            \State \Return $x^1$
        \EndProcedure
    \end{algorithmic}
    \caption{Two-grid Cycle for Multigrid}
    \label{alg:multigrid}
\end{algorithm}

\subsection{Probabilistic Two-Grid Multigrid}

In this section we derive the pushforward distribution for a PIM based on two-grid multigrid.
This both emphasises the difficulty in deriving iterations in general, motivating affine tracing, and also is used as a benchmark in \cref{sec: experiments} to demonstrate that tracing-based implementations can be \emph{faster} than hand implementations.
We adopt the same notation as in \cref{sec:prob_multigrid_is_hard} for the multigrid cycle.
To simplify the exposition we abuse notation for affine maps of probability measures; if $\mu$ is a Gaussian measure on $\reals^d$, $B \in \reals^{m \times d}$ and $v \in \reals^d$ then we take $B \mu \equiv B_\sharp \mu$ and $\mu + v \equiv (x \mapsto x + v)_\sharp \mu$.

Given $\mu^1 \sim \mathcal{N}(x^1, \Sigma^1)$, the probabilistic 2-cycle multigrid update can be computed as follows
\begin{itemize}
    \item Presmoothing: $\tilde{\mu}^1 = G{\mu}^1 + f $.
    $$\tilde{\mu}^1 \sim \mathcal{N}(Gx^1 + f, G \Sigma^1 G^\top)$$
    \item Residual: $\rho^1 = b - A \tilde{\mu}^1$.
    $$\rho^1 \sim \mathcal{N}(b - A (G x^1 + f), A G \Sigma^1 (A G)^\top)$$
    \item Course grid restriction: $\rho^2 = R \rho^1$.
    $$\rho^2 \sim \mathcal{N}(Rb - R A (Gx^1 + f), R A G \Sigma^1 (R A G)^\top)$$
    \item Coarse grid solve: $\epsilon^2 = (A^2)^{-1}\rho^2$.
    $$\epsilon^2 \sim \mathcal{N}((A^2)^{-1} Rb - (A^2)^{-1} R A (G x^1 + f), (A^2)^{-1} R A G \Sigma^1 ((A^2)^{-1} R A G)^\top) $$
    \item Interpolate: $\epsilon^1 = P \epsilon^2$
    $$\epsilon^1 \sim \mathcal{N}(P (A^2)^{-1} Rb - P (A^2)^{-1} R A Gx^1, P (A^2)^{-1} R A G \Sigma^1 (P (A^2)^{-1} R A G)^\top)$$
    \item Coarse-grid correction: $\mu^{1, \textsc{cgc}} = \tilde{\mu}^1 + \epsilon^1$.
    \begin{align*}
\mu^{1, \textsc{cgc}} &\sim \mathcal{N}( G x^1 + f + P (A^2)^{-1} Rb - P (A^2)^{-1} R A Gx^1, \Sigma^{1,\textsc{cgc}}) \\
    \Sigma^{1, \textsc{cgc}} &= G \Sigma^1 G^\top + P (A^2)^{-1} R A G \Sigma^1 (P (A^2)^{-1} R A G)^\top - \textup{Cov}(\epsilon_1, \tilde{\mu}_1) - \textup{Cov}(\tilde{\mu}^1, \epsilon^1) \\
    \textup{Cov}(\epsilon_1, \tilde{\mu}_1) &= P (A^2)^{-1} R A G \Sigma^1 G^\top
    \end{align*}
    \item Post-smoothing: $\mu^1_2 = G{\mu^{1, \textsc{cgc}}} + f $
    \begin{align*}
        \mu^1_2 &\sim  \mathcal{N}(x^1_2, \Sigma^{1}_2) \\
        x^1_2 &= G(G x^1 + f + P (A^2)^{-1} Rb - P (A^2)^{-1} R A Gx^1) + f \\
    \Sigma^{1}_2 &= G \left[ G \Sigma^1 (G)^\top + P (A^2)^{-1} R A G \Sigma^1 (P (A^2)^{-1} R A G)^\top - P (A^2)^{-1} R A G \Sigma^1 (G)^\top \right. \\
        &\qquad \left. - (P (A^2)^{-1} R A G \Sigma^1 (G)^\top)^\top  \right]G^\top
    \end{align*}
\end{itemize}
$\mu^1_2 \sim  \mathcal{N}(x^1_2, \Sigma^{1}_2)$ gives the distribution of the update. This is implemented in \cref{alg: pmg}.

\begin{algorithm}
    \begin{algorithmic}[1]
		\Procedure{\textsc{ProbTwoGridCycle}}{$x, \Sigma$}
        \State $x^1, \Sigma^1 = u_0, \Sigma_0$
        \State $x^1, \Sigma^1 = G x^1, G \Sigma^1 G^\top $
        \Comment{Presmoothing}
        \State $\Sigma^1_{r,l} =  A \Sigma$
        \State $r^1, \Sigma^1_r = b - A x^1, \Sigma_l A^\top$
        \Comment{Computing the residual}
        \State $\Sigma^{2}_{r,l} = R \Sigma^1_{r,l}$
        \State $r^2, \Sigma^{2}_r = R r, R \Sigma^1_r R^\top$
        \Comment{Restrict residual to coarse grid}
        \State $\Sigma^{2}_{e,l} = (A^2)^{-1}\Sigma^{2}_{r,l}$
        \State $e^2, \Sigma^{2}_2  = (A^2)^{-1} r^2, (A^2)^{-1} \Sigma^{2}_r ((A^2)^{-1})^\top $
        \Comment{Solve on coarse grid}
        \State $\Sigma^{1}_{e,l} = P \Sigma^{2}_{e,l} $
        \State $e^1, \Sigma^{1}_e = P e^2, P \Sigma^{2}_e P^\top$
        \Comment{Interpolate correction to fine grid}
        \State $x^1, \Sigma^1  = u^1 + e^1, \Sigma^1 + \Sigma^1_e - \Sigma^1_{e,l} - ( \Sigma^1_{e,l})^\top $  
        \Comment{Update iterate}
        \State $x^1, \Sigma^1  = G x^1, G \Sigma^1 G^\top $
        \Comment{Postsmoothing}
		\State \Return $x^1, \Sigma^1$
	\EndProcedure
    \end{algorithmic}
    \caption{Probabilistic Two-grid Multigrid} \label{alg: pmg}
\end{algorithm}

\subsection{Additional Results for the Affine Tracer Performance}

\cref{fig: time_affine_tracer} compares the performance of the hand-implemented two-grid multigrid to the traced version, demonstrating faster code due to obviating the need to track cross-covariance matrices.

\begin{figure}
\centering
\includegraphics[width=0.5\textwidth]{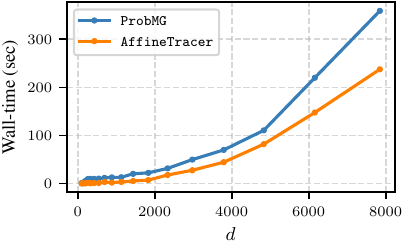}
\caption{Times for probabilistic linear solvers as described in \cref{sec: time_affine_tracer}.} \label{fig: time_affine_tracer}
\end{figure}

%% file: A02_cagps.tex
\section{Computation-Aware Gaussian Processes} \label{sec: cagp}

An emerging use case of \acp{pls} is for approximation of \acp{GP} \citep{rasmussen_gaussian_2006} through \emph{computation-aware Gaussian processes} \citep{wenger_posterior_2022,wenger_computation-aware_2024,hegde_calibrated_2025,sinaga2025robust}. 
We now briefly introduce these methods.

Suppose $f \sim \nu_0 = \mathcal{GP} (m_0, k)$, with a positive definite kernel $k$. 
We consider inference under the observations $y = f(X) + \epsilon$, where $X$ is the set of training points and $\epsilon \sim \normal(0, \Gamma)$.
The conditional distribution is given by
\begin{align*}
    \nu^\star &= \mathcal{GP}(\bar{m}, \bar{k})\\
    \bar{m}(x) &= m_0(X') + k(X', X) A^{-1} b \\
    \bar{k}(x, x') &= k(x, x') - k(x, X) A^{-1} k(X, x')
\end{align*}
where $A \defeq k(X, X) + \Gamma$ and $b = y - m_0(X)$. 
A CAGP is obtained by applying a \ac{pls} to solve the linear system $Ax = b$, required for computing the posterior mean, and marginalising the uncertainty from the \ac{pls}.
When the \ac{pls} has a posterior of the form $\mu = \normal(\hat{x}, A^{-1} - D)$ (i.e.\ under the inverse prior, as described in \cref{sec:inverse_prior}) a cancellation of the terms involving $A^{-1}$ in the covariance kernel occurs, resulting in the CAGP posterior:
\begin{align*}
    \tilde{\nu} &= \mathcal{GP}(\tilde{m}, \tilde{k})\\
    \tilde{m}(x) &= m_0(x') + k(x, X) \hat{x} \\
    \tilde{k}(x,x') &= k(x, x') - k(x, X) D k(X, x').
\end{align*}

Notably, only the downdate $D$ is required for computation, motivating the interest in this term in \cref{sec:inverse_prior}.
This posterior covariance kernel is wider than the \ac{GP} posterior covariance to account for the additional uncertainty incurred due to reduced computation due stopping the \ac{pls} before convergence. 

To implement CAGPs efficiently, it is essential that the term $k(x, X) D k(X, x')$ can be computed efficiently.
This is straightforward when $D$ is low-rank, as is often the case for Bayesian methods.
For non-Bayesian methods, however, $D$ is often full- or near-full rank.
To perform efficient computation one therefore assumes that only low-dimensional summary statistics of the GP are required, e.g.\ its evaluations on a small test set $X'$, computation of which is discussed in \cref{sec:inverse_prior}.

Several different choices of \ac{pls} have been considered in \acp{cagp}. 
It has been shown that BayesCG \citep{cockayne_bayesian_2019} provides fast convergence of the \ac{cagp} posterior mean to the truth (as seen in \citet{wenger_posterior_2022}), but the poor calibration properties of BayesCG lead to poor calibration of the \ac{cagp}. 
\citet{hegde_calibrated_2025} show that a calibrated \ac{pls} results in calibrated \ac{cagp} (in the same strong calibration sense as \cref{def:strong_calib}).
This was demonstrated using a simple PIM based on Gauss-Seidel, but for that approach mean convergence was unsatisfactory.
Affine tracing enables more straightforward exploration of the set of affine PIMs to identify methods which maintain the calibration guarantees provided in \citet{hegde_calibrated_2025}, while also providing fast convergence. 

\cref{alg: CAGP-SIS} describes how to implement a \ac{cagp} using the output of affine tracing, assuming a \emph{stationary} PIM is used.
For that implementation \textsc{at} is assumed to be the output of the affine tracer, after post-processing with \texttt{egglog}, tracing the (stationary) map $P(x) = Gx + f$, where $G = I - M A$ and $S(M) = M + M^\top - M A M^\top$.
We assume that this provides access to the following functions:
\begin{align*}
    \textsc{at.p}(x) &= P(x) \\
    \textsc{at.g}(V) &= V G \\
    \textsc{at.s}(V) &= V S(M).
\end{align*}

\begin{algorithm}[t!]
\caption{Computation Aware GP} \label{alg: CAGP-SIS}
\begin{algorithmic}[1]
    \Function{\textsc{cagp-at}}{$X$, $y$, $X'$, $\Gamma$, \textsc{at}}
    \State $b \gets y - m_0(X)$
    \State $V \gets k(X', X)$
    \State $A(v) \gets v \mapsto k(X, X) v + \Gamma v$
    \State $\tilde{v}, \tilde{D} \gets \textsc{capg-pls-at}(A, b, V, \textsc{at})$
    \State $\tilde{m} \gets m_0(X') + \tilde{v}$
    \State $\tilde{k} \gets k(X', X') - \tilde{D}$
    \State \Return $\tilde{m}, \tilde{k}$
    \EndFunction
    \end{algorithmic}
\begin{algorithmic}[1]
\Function{\textsc{capg-pls-at}}{$A$, $b$, $V$, \textsc{at}}
\State $\tilde{v}_0 \gets 0$
\State $\tilde{D}_0 \gets 0$
\State $Z_1 \gets V$
\For{$i = 1$ to $m$}
    \State $v_i \gets \textsc{at.p}(v_{i-1})$ 
    \State $\tilde{D}_{i} \gets \tilde{D}_{i-1} + \textsc{at.s}(Z_{i}) \cdot{Z_{i}}$
    \State $Z_{i+1} \gets \textsc{at.g}(Z_i)$
\EndFor
\State \Return $V v_m$, $\tilde{D}_m$
\EndFunction
\end{algorithmic}
\end{algorithm}

\subsection{Additional Results for the ERA5 Regression}

\cref{fig: era5:rmse_nll} shows convergence plots for the ERA5 regression problem considered in \cref{sec: era5}.

\begin{figure}[t!]
\includegraphics[width=\textwidth]{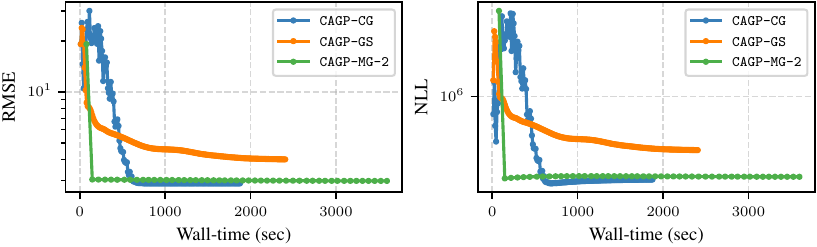}
\caption{RMSE and NLL for the ERA5 regression problem in \cref{sec: era5}.} \label{fig: era5:rmse_nll}
\end{figure}